\date{}
\title{Subspace Methods That Are Resistant\\
  to a Limited Number of Features Corrupted by an Adversary}
\author{Chris Mesterharm, Rauf Izmailov, Scott Alexander, and Simon Tsang\\
  Perspecta Labs, Basking Ridge, NJ 07920\\
}
\newtheorem{lemma}{Lemma}
\newtheorem{theorem}{Theorem}
\newtheorem{corollary}{Corollary}
\newenvironment{proof}
{\noindent \textbf{Proof}: \begin{quote} \vspace{-.1in}}
  {~$\Box$ \end{quote}}
\begin{document}

\maketitle

\begin{abstract}
  In this paper, we consider batch supervised learning where an
  adversary is allowed to corrupt instances with arbitrarily large
  noise. The adversary is allowed to corrupt any $l$ features in each
  instance and the adversary can change their values in any way.  This
  noise is introduced on test instances and the algorithm receives no
  label feedback for these instances.  We provide several subspace
  voting techniques that can be used to transform existing algorithms
  and prove data-dependent performance bounds in this setting.  The
  key insight to our results is that we set our parameters so that a
  significant fraction of the voting hypotheses do not contain corrupt
  features and, for many real world problems, these uncorrupt
  hypotheses are sufficient to achieve high accuracy.  We empirically
  validate our approach on several datasets including two new
  datasets that deal with side channel electromagnetic information.
\end{abstract}

\section{Introduction} \label{Introduction}

In this paper, we consider standard batch supervised learning with the
additional assumption that an adversary can corrupt a subset of
features during the evaluation/use of the algorithm's hypothesis.  We
give techniques to make any machine learning algorithm resistant to
this corruption and give data dependent upper bounds on the error
rate.  This type of analysis is useful since it gives a way to weaken
the independent and identically distributed (iid) assumption of many
machine learning results.  Many practical problems cannot be
accurately modeled with the iid assumption.

The feature corruption we consider is related to the popular
adversarial research that attempts to change the predicted label of an
image without significantly changing the image or its true label as
confirmed by a human~\cite{SZS13}.  While our techniques can be
applied to that setting, we do not focus on cases where the adversary
cannot change the true label.  If the structure of the problem is such
that the adversary can corrupt the instance and change the label then
our bounds will reflect the increase in error rate.  While this issue
is unavoidable for general machine learning problems, we find that
many realistic machine learning problems have large amount of
redundancy which can be exploited to help tolerate this type of
corruption.

Our corruption model allows the adversary to change up to $l$ features
in any possible way.  This includes picking different features to
modify for each instance.  This is an established adversarial
corruption model \cite{CW17} and is useful for more than just images.
For example, any problem that has categorical features should allow
arbitrary changes to the features since there is no natural metric on
features to control the amount of change.  Another example is missing
values.  While some missing values might occur during training, an
increase during production use can be modeled by an adversary and our
techniques can give performance bounds in this setting.

One thing we wish to stress is that while our techniques can work
against a strong adversary, we feel its primary benefit is situations
that are not adversarial but also not iid.  For example, consider the
popular example of autonomous driving.  Even if there are no
adversarial instances on the road, we still want a system that has
provable guarantees that are stronger than traditional iid
assumptions.  If the car is driven in a non-training environment or a
sensor starts to malfunction, we want the system to still make
accurate predictions or at least report that something is wrong.
This is related to redundant and error correcting systems, but in this
context, we are building it into the machine learning classifier.

The main intuition of our technique is to use majority vote where each
hypotheses only uses a fraction of the features such that we can
guarantee that more than half of the hypotheses do not use any corrupt
features.  We give a range of methods and show that we can tolerate
$O(n/k)$ corrupt features where $n$ is the total number of features
and $k$ is the number of features in each hypothesis.  Our main result
is to give a data dependent bound for our techniques.  This works in
the same way as a standard test set and can be used to generate
upper bounds on the error rate for different amounts of corruption.
The technique is broadly applicable as one can use validation sets to
pick an effective $k$ value against a worst case adversary.  While in
most cases, it is not possible to know how many features the adversary
can corrupt, we find that the validation set has a sweet spot where
the accuracy gains from a higher $k$ value are offset by the fact that
the adversary can corrupt more hypotheses as $k$ increases.

Our techniques can be used with any machine learning algorithm.  After
picking $h$ subsets of features one just applies the machine learning
algorithm(s) to each subset to create the majority vote classifier.
One advantage of this approach is that our method is able to inherit
many of the benefits of the basic algorithm used to learn the
hypotheses.  In some cases, this could include resistance to different
types of adversarial attacks.  For example, if there is an algorithm
that can tolerate the popular adversarial image attack where each
pixel can change a small amount, we can use that algorithm as the
basic algorithm to generate a classifier that can tolerate examples
generated by a more powerful adversary that can arbitrarily modify $l$
features and make minor changes to all $n$ features.

There has been significant amount theoretical research in concept
drift \cite{SK12} and adversarial learning~\cite{MMSTV18, FFF18}.  Our
techniques are strongly related to the random subspace
method~\cite{Ho98}.  In fact, this is one of the algorithms we use in
our experiments.  Our main contribution over the existing research is
to analyze the random subspace method in the adversarial setting and
to give new subspace methods that have stronger adversarial
performance bounds.  In \cite{QSSL09}, a SVM-based optimization
problem is given based on the assumption that features are removed by
an adversary giving them a value of zero.  Our adversary is more
general in that it can generate any value for the corrupt features
allowing it to greatly distort the prediction or hide the
modifications.  In \cite{BFHP12}, a similar strategy of using few
features per basic hypotheses is presented, but the technique uses a
stacking approach that can give larger influence to hypotheses that
might have been corrupted.  They show experimentally that their
technique is effective with their weaker adversarial assumptions. In
\cite{BFR10}, intuitive arguments are given for ensemble methods,
including the random subspace method, but again the results are
experimental.

There has also been a large amount of research on this problem in the
online setting where constant label feedback is available and can be
used to adapt the hypotheses to changes in the target
function~\cite{LW94, HW01, Mes03}. Our analysis is for the more
difficult problem where no label feedback is available after training
and our approach is to construct a fixed hypothesis that is robust to
the adversarial changes.

\section{Adversarial Learning Problem} \label{problem}

Let $X = \mbox{R}^n$ be the instance space and $y = \{1,\ldots,d\}$ be
be a set of discrete labels.  Let $\mbox{Train}$ be a sequence of
training instances selected independently from distribution
$\mbox{P}(X,y)$ and let
$\mbox{Test} = \langle e_i | i \in {1,\ldots,m}\rangle$ be an sequence
of test instances selected independently from the same distribution.
The adversary is allowed to arbitrarily corrupt $l$ different features
on every test instance.  More formally, let
$C = \langle c_i | i \in {1,\ldots,m}\rangle$ be this corrupted
sequence of instances where for all $i$, $||(e_i-c_i)||_0 \leq l$.

To help describe our algorithms we use the term {\em corrupt feature}
to refer to any feature that has been modified by the adversary and
{\em corrupt hypothesis} to refer to any hypothesis that contains a
corrupt feature.  For voting, we refer to any hypothesis used in the
vote as a {\em basic} hypothesis.  As mentioned, the testing has a
potentially infinite number of instances and we refer to each one as a
{\em trial}.  While the term trial is often used in online learning
\cite{Lit88}, we should stress our results are not for the online
model as we do not receive label feedback after making a prediction.
Instead, we learn a static classifier that is robust to adversarial
changes in the instances.

\section{Majority Vote Data Dependent Bounds} \label{Data-Dependent}

The key component of our technique is majority vote where each basic
hypotheses predicts a single label and the voting prediction is the
label that occurs most frequently and randomly over the labels in the
case of a tie.  The main intuition is that we generate basic
hypotheses for the voting such that a majority of them do not contain
features that are corrupt.  If these uncorrupt hypotheses predict the
correct label than the predicted label will be correct.  Of course, it
is unlikely that these uncorrupt hypotheses will be perfect, so in
this section we give a data dependent bound on the error rate using
uncorrupt test data.  In the next section, we will show how to control
the number of corrupt hypotheses as a function of the number of
corrupt features, $l$.

To state our main result, we start with some definitions.  Let $s_x$
be the vector of label counts made by the majority vote on instance
$x$.  Define $\delta(s,y)=s[y] - \max_{y' \not = y}s[y']$ where $s$ is
an integer vector with $d$ elements; one element for each label value.
Notice that $\delta(s_x,y)$ corresponds to the margin of instance
$(x,y)$ with respect to the majority vote.  Define the loss as
$l_c(s,y) = I(\delta(s,y)-2c \leq 0)$ where $I$ is the indicator
function and $c$ will depend on properties of the adversary.  We will
show this loss function can be used to upper bound the error rate on
instances that have been corrupted by the adversary.

\begin{theorem} \label{concentration}
  Assume that an adversary can corrupt at most $c$ majority vote
  hypotheses on any instance.  Let
  $T=\langle(x_1,y_1),\ldots,(x_m,y_m)\rangle$ be a sequence of $m$
  uncorrupted test instances sampled from distribution $P(x,y)$.  The
  error rate of the majority vote on corrupt instances is greater than
  $\sum_{i=1}^{m} l_c(s_{x_i},y_i)/m + \epsilon$ with
  probability at most $\exp\left(-2m\epsilon^2\right)$.
\end{theorem}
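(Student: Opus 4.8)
The plan is to dominate the true (population) error rate on corrupt instances by the population expectation of the loss $l_c$, and then apply a standard Hoeffding concentration bound to the empirical average of $l_c$ over the test sample $T$.

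First I would establish the central \emph{domination} step: for any instance $(x,y)$ whose uncorrupted margin satisfies $\delta(s_x,y) > 2c$, no adversary corrupting at most $c$ hypotheses can force a misclassification. The argument is that each corrupt hypothesis can, in the worst case, switch its vote from the correct label $y$ to some competing label $y'$, which decreases $s_x[y]$ by one and increases $s_x[y']$ by one, lowering $\delta(s_x,y)$ by at most $2$. Corrupting $c$ hypotheses therefore lowers the margin by at most $2c$, so the corrupted margin stays strictly positive, $y$ retains a strict plurality, and the majority vote (which breaks ties uniformly but faces no tie here) still predicts $y$. Consequently the $0/1$ error indicator on the corrupt instance is bounded above, pointwise in $(x,y)$, by $l_c(s_x,y) = I(\delta(s_x,y) - 2c \le 0)$.

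Taking the expectation over $(x,y) \sim P(x,y)$ then shows the error rate on corrupt instances is at most $E_{(x,y)}[l_c(s_x,y)]$. Since each $l_c(s_{x_i},y_i) \in \{0,1\}$ and the test instances are drawn independently, the empirical average $\frac{1}{m}\sum_{i=1}^{m} l_c(s_{x_i},y_i)$ is an unbiased estimator of this expectation with values in $[0,1]$, so the one-sided Hoeffding inequality gives $P\left(E[l_c] - \frac{1}{m}\sum_{i=1}^{m} l_c(s_{x_i},y_i) > \epsilon\right) \le \exp(-2m\epsilon^2)$. Combining this with the domination bound, and using that $\{\text{error rate} > \tfrac{1}{m}\sum_i l_c + \epsilon\}$ is contained in $\{E[l_c] > \tfrac{1}{m}\sum_i l_c + \epsilon\}$, yields the claimed tail bound.

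The main obstacle I anticipate is making the domination step fully rigorous: verifying that the adversary's best response really costs exactly two units of margin per corrupt hypothesis rather than more, and handling the tie-breaking convention so that a strictly positive corrupted margin genuinely guarantees a correct prediction. The concentration half is routine once $l_c$ is recognized as a bounded i.i.d.\ average; the only care needed is to invoke the one-sided form of Hoeffding and to observe that the \emph{uncorrupted} test labels are precisely what is available for evaluating $l_c$, which is what makes the bound data dependent.
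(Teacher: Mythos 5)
Your proposal is correct and follows essentially the same route as the paper's own proof: the margin-domination step (each corrupt hypothesis costs at most $2$ units of margin, so the corrupted error indicator is dominated by $l_c$ on the uncorrupted instance) followed by a one-sided Hoeffding bound on the i.i.d.\ Bernoulli losses. If anything, your writeup is slightly more careful than the paper's, which states the Hoeffding deviation in the opposite direction (empirical exceeding true mean) from the one the theorem actually requires, though the symmetric one-sided bound makes this immaterial.
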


\begin{proof}
  Let $(x_i,y_i)$ be an instance generated by the adversary on trial
  $i$ and let $s_{x_i}$ be the vector of label counts.  The margin for
  this instance is $M_i = \delta(s_{x_i},y_i)$.  We can We can
  decompose this margin into two components as $M_i = X_i - B_i$ where
  $X_i$ is margin that results from selecting the instance from
  distribution $P(x,y)$ and $B_i$ is based on the adversary corrupting
  $c$ hypotheses.  Since every corrupt hypotheses can decrease the
  margin by at most 2 by shifting one vote away from the correct label
  and moving it to the incorrect label with the highest vote, we have
  $M_i \geq X_i - 2c$.  Since $X_i - 2c$ is a random variable, we can
  apply a Hoeffding bound to the Bernoulli variables
  $I(X_i - 2c \leq 0)$.  This bound will also be valid for
  $I(M_i \leq 0)$, since $M_i \geq X_i - 2c$.  A direct application
  Theorem~1 in \cite{Hoe63} proves that the empirical mean of
  $I(X_i - 2c \leq 0)$ has a probability of at most
  $\exp\left(-2\epsilon^2m\right)$ of exceeding the true mean by
  $\epsilon$.
\end{proof}

It is easiest to understand this bound with a plot.  In
Figure~\ref{score-plot}, we give a histogram of the Score function
over all the test data.  Each corrupt hypothesis will shift this
histogram by at most 2 to the left.  By counting all the values that
are still above 0, we get a bound on the error rate of majority vote.
In this case, the test error rate is around 0.06, but the numbers in
this plot can be used to show that in the worst case the error is at
most 0.1 when 5 features are corrupt.

\begin{figure}[ht]
  \begin{center}
    \centerline{\includegraphics[scale=0.3]{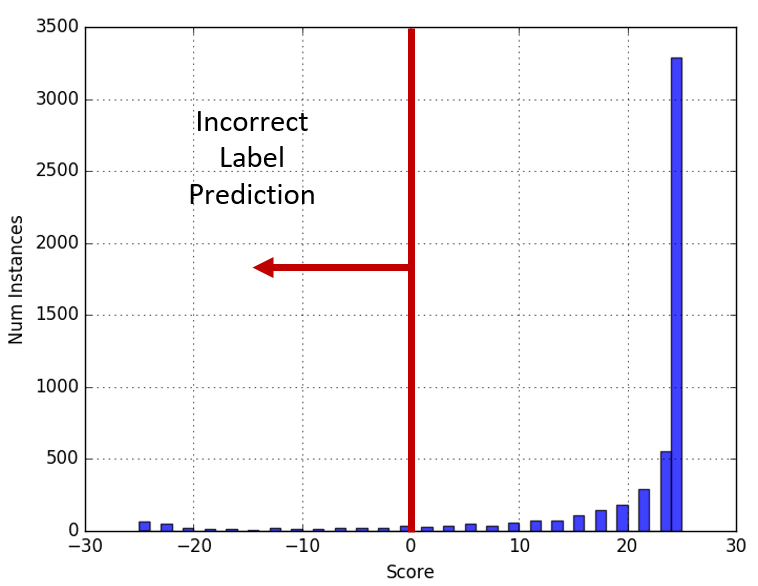}}
    \caption{Histogram of Score function values on 5248 test
      data points with 25 hypothesis for UCI character font
      dataset. The concept is to determine if character is italic with
      Arial font. For this experiment, a corrupt feature corrupts at
      most one basic voting hypothesis.}
    \label{score-plot}
  \end{center}
  \vskip -0.25in
\end{figure}

Notice that the histogram does not show the somewhat idealized case of
independence of errors, but for our purpose, dependence is fine.
Intuitively, what we are exploiting is redundancy of features.  While
the different hypotheses learned with these redundant features might
be highly correlated, we assume the corruption of one feature has no
effect on any related redundant features.  Also, while it might seem
restrictive to have a hard limit on $l$, and therefore $c$, it is
straightforward to assume $l$ comes from a distribution and use this
distribution to create an upper-bound on the error.

The bound in Theorem~\ref{concentration} can be improved by
incorporating information about the variance of the random variables,
but we omit the details for clarity and space. In addition the result
can be improved slightly by taking into account that the majority vote
makes a random prediction on ties. In practice, just as for normal
test set bounds, a computer algorithm should be used to get precise
bounds on the binomial distribution~\cite{Langford05}.  Also notice
that this bound is worst case in that it assumes a corrupt hypotheses
always makes the wrong prediction in the worst way possible.  This is
a reasonable assumption as it is true for certain types of machine
learning problems and classifiers.  For example, hyperplane
classifiers with $X=R^n$.  As mentioned in the introduction, the bound
can be improved by making more assumptions about the basic learning
algorithms used to generate the ensemble.

\section{Majority Vote Hypotheses Generation} \label{algorithms}

In this section, we give four techniques to generate hypotheses for
the majority vote.  They are all subspace methods since they only
select a subset of features for each hypotheses.  Our goal is to bound
how many of these $h$ hypotheses can be corrupted by an adversary that
is allowed to corrupt at most $l$ features on each instance.  To help
explain our results, let $n$ be the total number of features and let
$r$ be the percentage of hypotheses that are corrupt.  For all
methods, we will show that in order to have more than half the
hypotheses uncorrupted, in the worst case, they can at most tolerate
$O(n/k)$ corrupt features.  Roughly speaking, if we want to double the
number of corrupt features the algorithm can tolerate, we need the
halve the number of features in each hypothesis.

For all methods, we suggest, at a minimum, to randomize the initial
ordering of the features, since feature values and relevance might be
correlated in the ordering.  Another option is to use some type of
feature selection procedure, such as mutual information or domain
knowledge in an attempt to equalize the quality of the features across
the hypotheses.  This includes using validation sets to evaluate
possible feature orderings or parameter setting of the methods.

\subsection{Fixed Feature Split} \label{Fixed-Split}

We call our first method is called fixed-split as it partitions the
features into approximately equal sized disjoint groups. Our only
parameter is the number of hypotheses, $h$, and we attempt to
partition the features as evenly as possible.  If there is a remainder
when dividing $n$ by $h$, the remainder will be split by adding one to
the some of the hypotheses.  This means each hypotheses will have
either $k=\lfloor n/h \rfloor$ or $k=\lfloor n/h \rfloor + 1$.  To
simplify the analysis, we will assume $k=n/h$ as this does not
significantly change the results.

Notice that each corrupt feature will corrupt at most one hypothesis.
For example, with 900 features, we could learn 9 hypotheses that each
have 100 features. In terms of our variables, we have
$r \leq l/h = lk/n$.  This is not a strict equality since multiple
corrupt features might occur in a single hypothesis.  It is useful for
comparison with the other methods to rearrange this as $l \geq rn/k$
which shows how the tolerance to corrupt features is proportional to
$n/k$.

\subsection{All Size $k$ Feature Subsets} \label{Subset-Algorithm}

In this section, we consider the technique of generating every
possible $k$ feature hypothesis.  We call this the $k$-subset method.
While this can be intractable, it is a useful comparison case as all
of the remaining method are related to this technique.

Since we are considering every way to select $k$ features, there are a
total of $h=C(n,k)$ hypotheses.  Since a corrupt hypothesis has one or
more corrupt features, there are total of $C(n-l,k)$ uncorrupt
hypotheses, and therefore,
\[ r = 1 - \frac{C(n-l,k)}{C(n,k)} =
  1 - \prod_{i=0}^{i=l-1} \frac{n-k-i}{n-i}. \]
To make this result easier to interpret, we can use the fact that for
$x \in (0,1]$, $ln(1-x) \leq 2x/(2+x)$~\cite{Topsoe95} to prove that
for $0<a<x$, $\ln(x)-\ln(x-a) > 2a/(2x-a)$.  Using our previous
equality on $r$, we get $l < \ln(1/(1-r))n/k - 1/2$.  What we really
want is a lower bound on $l$, but we found it difficult to achieve a
tight and simple lower bound.  However, given that this upper bound is
tight for $ n \gg k$, we can use this an effective approximation to
better understand the result.  If needed, one can always use the exact
formula.  As can be seen, there is a slight advantage to this bound
over the bounds of the other methods as $r$ increases in size.  This
is due to the fact that this bound takes into account that, for this
method, as one increases the number of corrupt features some of these
features must occur in hypotheses that have already been corrupted.

Next we show that the $k$-subset method is optimal for the case
where all hypotheses have $k$ features.  As explained above, this
shows that $l \geq O(n/k)$ is a tight bound when $n \gg k$.

\begin{theorem}
  If there are $h$ hypotheses where each hypotheses uses at least $k$
  features from $\{x_0,\ldots,x_{n-1}\}$ and $l$ features are corrupt
  then the fraction of corrupt hypotheses is at least
  $1-\prod_{i=0}^{i=l-1} (n-k-i)/(n-i)$.
\end{theorem}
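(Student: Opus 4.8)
The plan is to prove the lower bound by the probabilistic method: instead of pinning down a single worst set of corrupt features, I would let the adversary's $l$ corrupt features be drawn uniformly at random and show that the \emph{expected} fraction of corrupt hypotheses already meets the claimed bound. Since a random quantity cannot lie strictly below its own mean, some realization of the corruption must attain the bound, and the adversary can simply select that set.

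First I would fix the given collection $H_1,\ldots,H_h$ (each using at least $k$ of the $n$ features) and let $L$ be a uniformly random size-$l$ subset of $\{x_0,\ldots,x_{n-1}\}$. A hypothesis $H_j$ that uses exactly $k$ features stays uncorrupt precisely when $L$ misses all $k$ of them, i.e.\ when $L$ is drawn entirely from the other $n-k$ features. This occurs for $C(n-k,l)$ of the $C(n,l)$ equally likely choices, so by the same factorial cancellation that produced the $k$-subset formula above,
\[ \Pr[H_j \mbox{ uncorrupt}] = \frac{C(n-k,l)}{C(n,l)} = \prod_{i=0}^{i=l-1}\frac{n-k-i}{n-i}. \]

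Next I would dispatch the ``at least $k$'' clause. If $H_j$ uses $k'\ge k$ features, the identical count gives $\prod_{i=0}^{i=l-1}(n-k'-i)/(n-i)$, and since each factor $(n-k'-i)/(n-i)$ only shrinks as the number of features grows, every hypothesis satisfies $\Pr[H_j \mbox{ uncorrupt}] \le \prod_{i=0}^{i=l-1}(n-k-i)/(n-i)$. Linearity of expectation then shows the expected number of uncorrupt hypotheses is at most $h\prod_{i=0}^{i=l-1}(n-k-i)/(n-i)$, so the expected fraction of \emph{corrupt} hypotheses is at least $1-\prod_{i=0}^{i=l-1}(n-k-i)/(n-i)$. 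Because this is an average over the choices of $L$, at least one choice attains a corrupt fraction this large, and that is the set the adversary uses.

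I expect the only real subtlety to be conceptual rather than computational: the theorem is a worst-case (existential) claim about the adversary's choice of features, so the averaging step is what does the work, converting an expectation into the guaranteed existence of a bad corruption set without ever constructing one explicitly. The monotonicity observation for the ``at least $k$'' case and the single factorial cancellation are routine, so no individual step should pose a genuine obstacle.
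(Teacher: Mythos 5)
Your proof is correct, but it takes a genuinely different route from the paper's. The paper argues greedily via the pigeonhole principle: after $i$ features have been corrupted, each of the $h_i$ surviving hypotheses places at least $k$ ``pigeons'' among the $n-i$ remaining features, so some feature is used by at least $h_i k/(n-i)$ survivors; corrupting that feature yields the recurrence $h_{i+1} \leq h_i(n-k-i)/(n-i)$, which telescopes to the claimed bound. You instead use the probabilistic method: draw the $l$ corrupt features uniformly at random, compute $\Pr[H_j \mbox{ uncorrupt}] = C(n-k,l)/C(n,l) = \prod_{i=0}^{i=l-1}(n-k-i)/(n-i)$ exactly for a size-$k$ hypothesis (and at most that for larger ones, by monotonicity of each factor), apply linearity of expectation, and conclude that some realization meets the average. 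Both arguments are sound and both correctly read the theorem as an existential claim about the adversary's choice. The paper's version is constructive --- it hands the adversary an explicit strategy (repeatedly corrupt the feature shared by the most surviving hypotheses), which is operationally useful for mounting or simulating an attack. Your version is shorter, handles the ``at least $k$'' clause more transparently, and makes the optimality of the $k$-subset ensemble especially vivid: for that ensemble every choice of $L$ achieves exactly the expectation, so the bound is tight by symmetry. One small point worth a sentence in a polished write-up: when $k' > n-l$ some factors $(n-k'-i)/(n-i)$ go negative, so the factor-by-factor comparison should be replaced by the observation that the uncorrupt probability is then simply $0$, which is still at most the (nonnegative) bound.
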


\begin{proof}
  Let $h_i$ be the number of non-corrupt hypotheses after $i$ features
  have been corrupted.  Let $r_i$ be the fraction of hypotheses that
  are corrupted when feature $i$ is corrupted.  We can use the pigeon
  hole principle where we consider the holes the $n-i$ remaining
  features and every hypothesis consists of at least $k$ pigeons.
  Therefore, with $h_i$ hypotheses, there must exist a feature that is
  used by at least $\lceil h_ik/(n-i)\rceil$ hypotheses.  This allows
  the adversary to always corrupt at least $h_ik/(n-i)$ hypotheses.
  This shows that
  \[
    h_{i+1} \leq h_i - \left(\frac{h_ik}{n-i}\right) =
    h_i \left( 1 - \frac{k}{n-i}\right) =
    h_i \left( \frac{n-k-i}{n-i}\right)
  \]
  Given that $h_0=h$, this proves the theorem.
\end{proof}

While it is possible to have a mixture strategy that uses different
numbers of features in the hypotheses, we currently do not see a
compelling reason to use a wide range of sizes for feature subsets.
If one can generate a significant number of hypotheses with few
features that also have good accuracy, then there is no reason to
generate hypotheses using significantly more features that have worse
guarantees.  The only exception is having a difference of one feature
between subsets.  This is close enough to often give a good bound and
is helpful for the fixed-split method and the $k$-modulus method that
will be explained in Section~\ref{MSM}.

\subsection{Random Subspace Method} \label{RSM}

The random subspace method is a well studied method\cite{Ho98} that
takes a random sample of $h$ hypotheses from the set of $C(n,k)$
hypotheses and is therefore an approximation of the $k$-subset
algorithm.  It has been show to give give good performance on many
types of uncorrupt learning problems, but this research is the first
to give provably guarantees in the adversarial setting.  As an
approximation of the $k$-subset method, it is possible to use a
concentration result to show that as $h$ increases with high
probability the method will not select many more than the average
number of corrupt hypotheses from the the $k$-subset algorithm.
However, formally this only holds for a single trial and a strong
adversary will be able to learn which features to corrupt to maximize
the error.  This maximum error can be bounded by explicitly computing
the number of hypotheses that can be corrupted as $l$ increases.
While one can randomly search the space of hypotheses to minimize this
number, in cases of a strong adversary, we recommend using one of the
other algorithms.

\subsection{Modulus Subspace Method} \label{MSM}

We call this the $k$-modulus method as it uses the modulus function to
build a deterministic group of feature subsets where each subset has
$k$ elements.  The $k$-modulus method can provably tolerate as many
corrupt features as the fixed-split method but allows more control
over the parameter $k$.  The technique starts by indexing the $n$
features as $f_0$ to $f_{n-1}$.  Next it creates feature sequences by
modifying the indexes.  Given an sequence of indexes, we add one to
each index using modulo arithmetic.  For example, if $n=10$, $k=3$ and
we start with $(f_0,f_4,f_8)$ then next two sequences would be
$(f_1,f_5,f_9)$ and $(f_2,f_6,f_0)$.  We repeat this procedure $n-1$
times to generate a group containing at most $n$ sequences of
features.  This procedure can be used on all length $k$ index
sequences to create a partition of all $C(n,k)$ feature subsets.
Using every feature subset would make it equivalent to the full
$k$-subset method.  Instead, we propose to use a small number of
groups from the partition.  As we will show, this will control the
cost while still giving us the $O(n/k)$ bound on the number of corrupt
features.

More formally, let $I(b)$ be defined as applying this plus one modulus
operation applied to sequence $b$ of feature indexes.  We define
$G(b)$ to be the set of sequences generated by applying $I(b)$, $n-1$
times on a feature subset $b$.  Let
$G(b) = \bigcup_{i=0}^{n-1} I^i(b)$.  Notice that $I^n(b)=b$ so any
further applications of operator $I$ will repeat previous sequences.

Define $P(n,k)$ to be the partition of all $C(n,k)$ feature subsets
that is generated by applying $G()$ to all possible feature size $k$
sequences.  One way to generate $P(n,k)$ is to iteratively build the
partition elements by applying $G$ to any feature sequence that is not
already part of the incrementally built partition.

To connect the corrupt features to the corrupt hypotheses, we need
some new definitions.  Let $v(b)$ be the binary list that has value 1
if the corresponding indexed element is in $b$, otherwise it has value
0.  For example, $b=(x_0,x_1,x_5,x_6)$ has
$v(b)=(1,1,0,0,0,1,1,0,0,0)$.  Let $p(c,j)$ be a circular right shift
by $j$ of feature sequence binary list $c$.  For example,
$p((1,1,0,0,0,1,1,0,0,0),2)=(0,0,1,1,0,0,0,1,1,0)$.  Finally let
$r(b)$ be the minimal shift in the binary sequence such that
$p(v(b),r(b))=v(b)$.

\begin{lemma} \label{lemma}
  Given a sequence of features $b$ from $\{f_0,\ldots,f_{n-1}\}$, the
  number of elements in partition group $G(b)$ is equal to $r(b)$.
\end{lemma}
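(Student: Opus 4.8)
The plan is to translate the combinatorial statement about feature subsets into a statement about the orbit of a binary vector under cyclic shifts, where the count $r(b)$ emerges as an orbit size. The first step is to record the exact relationship between the generating operator $I$ and the shift operator $p$. Adding one modulo $n$ to every index in $b$ moves each $1$ in $v(b)$ from position $j$ to position $j+1$, which is precisely one circular right shift; hence $v(I(b)) = p(v(b),1)$ and, by iteration, $v(I^i(b)) = p(v(b),i)$ for every $i$. Since the members of the partition group $G(b)$ are feature subsets, two iterates $I^i(b)$ and $I^j(b)$ name the same element of $G(b)$ exactly when $p(v(b),i) = p(v(b),j)$. Therefore $|G(b)|$ equals the number of distinct vectors among $p(v(b),0),\ldots,p(v(b),n-1)$, and (using $I^n(b)=b$) this is the size of the orbit of $v(b)$ under the group of circular shifts $\mathbb{Z}_n$.

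The second step is to compute this orbit size. Let $S = \{\, j : 0 \le j < n,\ p(v(b),j) = v(b)\,\}$ be the set of shifts that fix $v(b)$. I would first check that $S$ is a subgroup of $\mathbb{Z}_n$: it contains $0$, it is closed under addition because $p(v(b),j_1+j_2) = p(p(v(b),j_1),j_2)$, and it is closed under negation since shifting back by $n-j$ also fixes $v(b)$. Every subgroup of $\mathbb{Z}_n$ is cyclic and generated by its least positive element, which here is exactly the minimal fixing shift $r(b)$; consequently $r(b) \mid n$ and $|S| = n/r(b)$.

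The final step is the orbit--stabilizer count: the orbit of $v(b)$ has size $n / |S| = n /(n/r(b)) = r(b)$, which is the number of distinct subsets in $G(b)$ and proves the lemma. Equivalently, one can argue directly that $i \mapsto p(v(b),i)$ is periodic with minimal period $r(b)$, so it takes exactly $r(b)$ distinct values as $i$ ranges over a full period $\{0,\ldots,n-1\}$. I expect the main obstacle to be the second step, namely justifying that the set of fixing shifts is genuinely a subgroup and hence that the minimal period $r(b)$ divides every other period (and divides $n$); without this closure property one could not rule out accidental coincidences $p(v(b),i)=p(v(b),j)$ whose difference is not a multiple of $r(b)$, and the clean equality $|G(b)| = r(b)$ would fail. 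Everything else---the shift correspondence and the final division---is routine once this structural fact is in place.
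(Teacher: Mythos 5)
Your proof is correct and follows the same underlying idea as the paper's: identify the action of $I$ on $b$ with a circular shift of the binary vector $v(b)$, and then count the distinct iterates as the minimal period $r(b)$. The difference is in how the counting step is justified. The paper argues directly that $I^{r(b)}(b)=b$ gives the upper bound $|G(b)|\le r(b)$, and that $I^c(b)=b$ for $0<c<r(b)$ would contradict the minimality of $r(b)$; it leaves implicit the point you flag explicitly, namely that a coincidence $I^i(b)=I^j(b)$ with $0\le i<j<r(b)$ must reduce to an early return $I^{j-i}(b)=b$ (which follows from the invertibility of $I$, or equivalently from your observation that the fixing shifts form a subgroup of $\mathbb{Z}_n$). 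Your orbit--stabilizer formulation packages that closure property cleanly and as a bonus yields $r(b)\mid n$, a fact the paper needs anyway in Theorem~\ref{group}; the paper's version is shorter but relies on the reader to supply the invertibility step. Both are valid; yours is the more complete write-up of the same argument.
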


\begin{proof}
  Apply operation $I$ a total of $r(b)$ times on $b$.  Given the
  definition of $v(b)$, it must be the case that $I^{r(b)}=b$.
  Therefore the number of elements in the partition group must be less
  than or equal to $r(b)$.

  Assume we apply operation $I$ a total of $c<r(b)$ time on $b$ and
  that $I^{c}=b$.  Based on the definition of $v(b)$ this is a
  contradiction since this implies that $v(b)=v(I^{c})$.  This shows
  that $|G(g)| \geq c$ which proves the lemma.
\end{proof}

We can use this result to relate the number of corrupt features to the
number of corrupt hypotheses.

\begin{theorem}
  Assume the majority vote uses a subset of groups from the partition
  $P(n,k)$.  If an adversary corrupts $l$ features then $rn/k \leq l$
  where $r$ is the fraction of hypotheses that are corrupt.
\end{theorem}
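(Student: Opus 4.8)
The plan is to reduce the theorem to a single-feature, single-group calculation and then aggregate with a union bound. First I would fix one partition group $G(b)$ and exploit the shift structure of the $k$-modulus construction. By Lemma~\ref{lemma}, $G(b)$ consists of exactly the $r(b)$ distinct sequences $I^0(b),\ldots,I^{r(b)-1}(b)$, where $r(b)$ is the minimal circular shift fixing $v(b)$. Since the minimal period of a length-$n$ cyclic sequence must divide $n$, we have that $r(b)$ divides $n$, so $v(b)$ is $n/r(b)$ repetitions of a length-$r(b)$ block, and that block contains exactly $kr(b)/n$ ones.

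The key step is to count, for a single feature $f_j$, how many hypotheses in $G(b)$ are corrupted. Since $v(I^i(b)) = p(v(b),i)$, the feature $f_j$ lies in $I^i(b)$ exactly when $v(b)[(j-i)\bmod n]=1$. As $i$ ranges over $0,\ldots,r(b)-1$, the argument $(j-i)\bmod n$ sweeps a window of $r(b)$ consecutive positions, that is, one full period of $v(b)$, which contains exactly $kr(b)/n$ ones. Hence $f_j$ corrupts exactly $kr(b)/n$ of the $r(b)$ hypotheses in $G(b)$; equivalently, every feature corrupts precisely a fraction $k/n$ of each group, independent of both $j$ and the choice of group.

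Aggregating, let $H$ be the total number of hypotheses used by the majority vote, that is, the union of the selected groups. Summing the per-group counts, any single corrupt feature corrupts $(k/n)H$ hypotheses. A union bound over the $l$ corrupt features then shows that the number of corrupt hypotheses is at most $l(k/n)H$, so the corrupt fraction satisfies $r \le lk/n$, which rearranges to $rn/k \le l$.

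I expect the main obstacle to be the middle step: establishing that each feature hits exactly a $k/n$ fraction of every group. This hinges on the divisibility of $n$ by $r(b)$ together with the fact that the ones of $v(b)$ are distributed uniformly across every period-length window, both consequences of the periodicity captured by Lemma~\ref{lemma}. Once that uniformity is in hand, the union bound over features is routine and intentionally loose, since it over-counts hypotheses containing several corrupt features, which is acceptable because we only need the upper bound on $r$.
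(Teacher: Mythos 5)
Your proof is correct and follows the same overall strategy as the paper's: reduce to a single partition group, show that each corrupt feature corrupts a $k/n$ fraction of that group, and then aggregate over the $l$ features and the disjoint groups. The one place you genuinely diverge is in how the repeated elements of the orbit are handled. The paper lists all $n$ applications of $I$ \emph{with repeats}, observes that each feature index occurs exactly $k$ times in that multiset, concludes that a corrupt feature corrupts at most $k$ of the $n$ listed hypotheses, and then appeals to Lemma~\ref{lemma} to argue that every distinct hypothesis appears the same number of times, so the $k/n$ fraction survives deduplication. You instead work directly with the $r(b)$ distinct elements, using the fact that the minimal period $r(b)$ divides $n$, so that $v(b)$ decomposes into $n/r(b)$ identical blocks each containing $kr(b)/n$ ones, and that the shifts $i=0,\ldots,r(b)-1$ sweep exactly one full period. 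This yields an exact count---each feature lies in precisely $kr(b)/n$ of the $r(b)$ distinct hypotheses---rather than an averaged bound transferred through the multiplicity argument. Your route is slightly sharper and makes the deduplication step more explicit than the paper's ``repeated the same number of times'' remark, at the cost of invoking the divisibility $r(b)\mid n$, a fact the paper only develops later in Theorem~\ref{group}. Both arguments are valid and both rest on Lemma~\ref{lemma}; the final union bound over features and disjoint groups is identical.
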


\begin{proof}
  Let $g$ be any of the groups and look at an index set $b$ that
  generates this group.  Create a list of all index sets in this set
  by applying operator $I$ a total of $n$ times; this will include any
  repeated index sets.  Given that we start with $k$ features and
  cycle each index through all $n$ values then each feature index
  occurs $k$ times.  Therefore any corrupt feature will corrupt at
  most $k$ hypotheses.  Given that there are $n$ (potentially
  non-unique) hypotheses in the group, if the adversary corrupts $l$
  features then at most $kl$ hypothesis are corrupt.  This means that
  $r \leq kl/n$.  Based on Lemma~\ref{lemma}, each hypothesis is
  repeated the same number of times.  This shows that $r \leq kl/n$
  even after we remove repeats and therefore $r \leq kl/n$ for group
  $g$.  Given that this result holds for any group $g$, it also holds
  for the union of these disjoint groups which proves the theorem.
\end{proof}

This is the same bound as the fixed-split method.  The main issue
with the $k$-modulus method is that it can be expensive to work
with $n$ hypotheses.  This problem can be partially addressed by
manipulating the structure of the indexing to generate groups that are
smaller than $n$.

\begin{theorem} \label{group}
  For every group in $P(n,k)$, there must exist a common factor $q$ of
  both $n$ and $k$ where the group has size $n/q$.  Furthermore, for
  every $q$ that is a factor of both $n$ and $k$, there must exist at
  least one group with $n/q$ unique elements.
\end{theorem}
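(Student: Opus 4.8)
The plan is to recast the claim as a statement about the periods of binary necklaces and lean on Lemma~\ref{lemma}, which equates the size of a group $G(b)$ with $r(b)$, the minimal circular shift fixing the binary list $v(b)$ of length $n$ with exactly $k$ ones.

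For the first assertion, I would fix a group, choose a generator $b$, and write $|G(b)| = r(b)$ by Lemma~\ref{lemma}. The key observation is that the set of shifts $j$ with $p(v(b),j) = v(b)$ is closed under addition modulo $n$ (shifting by $a$ then by $c$ is shifting by $a+c$), hence forms a subgroup of $\mathbb{Z}_n$ whose least positive element is $r(b)$; consequently $r(b) \mid n$. Setting $q = n/r(b)$, the list $v(b)$ is then $q$ identical repetitions of a block of length $r(b)$, and since each repetition carries the same number of ones while the total is $k$, each block has $k/q$ ones. Integrality forces $q \mid k$, so $q$ is a common factor of $n$ and $k$ and the group has size $n/q$, as claimed.

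For the converse, given any common factor $q$ of $n$ and $k$, I would set $t = n/q$ and $m = k/q$ (both integers) and exhibit an explicit witness: the subset $b$ whose list $v(b)$ is the $q$-fold repetition of the block $1^m 0^{\,t-m}$, working in the nondegenerate range $0 < k < n$ so that $0 < m < t$. This list has exactly $k$ ones and is manifestly fixed by the shift $t$, giving $r(b) \le t$.

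The main obstacle is ruling out a strictly smaller period, i.e.\ showing $r(b) = t$ rather than a proper divisor of $t$. I would settle this through the run structure of the contiguous-block construction: $v(b)$ has exactly $q$ maximal runs of ones, one per block, whose starting positions are precisely the multiples of $t$ in $\mathbb{Z}_n$. Any shift fixing $v(b)$ must carry this set of run-starts to itself and therefore must itself be a multiple of $t$; combined with $r(b) \le t$ this yields $r(b) = t$. Lemma~\ref{lemma} then gives $|G(b)| = t = n/q$, so a group of the required size exists, completing the argument.
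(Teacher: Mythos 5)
Your proof is correct and follows essentially the same route as the paper's: both halves hinge on Lemma~\ref{lemma}, the first by observing that $v(b)$ decomposes into $q$ identical blocks of length $r(b)$ (forcing $q \mid n$ and $q \mid k$), and the second by exhibiting the same witness, the $q$-fold repetition of a block with its first $k/q$ entries equal to $1$. Your version is somewhat more careful than the paper's at the two points it glosses over, namely the subgroup argument giving $r(b)\mid n$ and the run-structure argument ruling out a period smaller than $n/q$ for the explicit witness.
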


\begin{proof}
  Let $b$ be an index sequence.  Based on Lemma~\ref{lemma}, $v(b)$
  must have a structure where the length of the repetition is $r(b)$.
  Given this structure, we can conclude that there exists an integer
  $q$ such that $n=q(r(b))$ where $q$ is the number of times the
  pattern repeats.  Also, we known that binary sequence $v(b)$ only
  has $k$ values that are 1.  Therefore $k=qs$ where $s$ is the number
  of values that are 1 in one of the repeats.  For example, if $n=24$
  and $b=(x_0,x_4,x_7,x_8,x_{12},x_{15},x_{16},x_{20},x_{23})$ then
  then $n=3(8)$ and $k=3(3)$.  Based on Lemma~\ref{lemma}, we can
  conclude that $s=r(b)=n/q$, which proves the first half of the
  result.

  For the second half, assume $q$ is a common factor of both $n$ and
  $k$.  It must always be the case that one can construct a
  non-repeating binary pattern of size $n/q$ where $k/q$ of the
  elements are 1.  For example, we can make the first $k/q$ elements 1
  and the remaining elements 0.  This pattern directly maps to an
  index sequence that, based on Lemma~\ref{lemma}, generates a group
  of size $n/q$.
\end{proof}

\begin{corollary}
  All partition groups in $P(n,k)$ have size $n$ iff $n$ and $k$ are
  relatively prime.
\end{corollary}

\begin{proof}
  Assume all partition groups in $P(n,k)$ have size $n$ and that
  $\mbox{gcd}(n,k) = q > 1$.  Based on Theorem~\ref{group} there must
  exist a group of size $n/q$ which is a contradiction.

  Assume $\mbox{gcd}(n,k) = 1$ and that there is a partition group $g$
  with $|g|<n$.  Based on Theorem~\ref{group}, this is a
  contradiction.
\end{proof}

Various group sizes are relatively easy to generate by creating binary
strings with repeating patterns.  Unfortunately, it is not always
possible to get the exact size we want, but with a slight
modification, we can create a group size that is at most
$\lceil n/k \rceil$.  We accomplish this by adding $k - (n \bmod k)$
dummy features.\footnote{A dummy feature is used to generate the
  groups but is not used with the learning algorithm.}  This adds at
most one dummy feature per hypothesis.  This is similar to the
technique used in the fixed-split method when a perfect split is
not possible.  In that case, we interpreted the split as some
hypotheses having one less feature, but that is equivalent to adding
dummy features. In fact, when the group size is $\lceil n/k \rceil$,
this is the fixed-split method which shows that the modulus method is
a generalization of that simpler technique.

\section{Decreasing Majority Vote Cost} \label{Ensemble-Cost}

One issue with using an large ensemble of hypotheses is prediction
time.  We recommend speeding up prediction time by using sequential
sampling techniques that randomly sample the voting hypotheses until a
prediction can be made with a controllable probability of
correctness~\cite{Wald47}.  This is most beneficial when the adversary
only occasionally corrupts an instance since, on many problems,
uncorrupt instances are often predicted correctly for a large fraction
of the hypotheses and sequential sampling can quickly and accurately
estimate the majority label.  In addition, while most sequential
sampling techniques assume that sampling is done with replacement, it
should be possible to get better bounds since sampling in this case is
done without replacement.  Another refinement is to take advantage of
a multi-core computer architecture and evaluate several hypotheses in
each step of the sequential sampling.

\section{Experiments} \label{Experiments}

In this section, we give experiments on a range of datasets which
include worst case bounds when $l$ features are corrupt and actual
results against a simple weak adversary.  The weak adversary is not
supposed to model a worst case adversary, but is primarily used to
show how traditional algorithms degrade with a simple change in the
distribution.

To generate our weak adversary, we want to corrupt different relevant
features on each trial.  We do this by creating a distribution on the
features using mutual information (MI)~\cite{Kul59}.  For each
instance, after selecting $l$ features to corrupt from the created MI
distribution, we change the feature value to be the maximum value on
the opposite side of the mean.  For example, if a feature has a range
of [-2,3] and a mean of 1 in the training data then during testing we
corrupt a feature with value -1 by changing it to 3.

The eight datasets we used are described in
Table~\ref{learning-problems}. All our voting techniques use random
forest as the basic classifier as it is easy to tune to give high
accuracy on most machine learning problems~\cite{CNM06}.  All graphs
include a 99\% confidence interval based on the exact binomial
distribution~\cite{Langford05}. We used the scikit-learn Python
library \cite{scikit} for all our code and used the standard cross
validation library for parameter selection where the number of random
features select was from $\{\sqrt{n},2\sqrt{n},n\}$ and the number of
trees was selected from $\{10,20,50,100\}$.  For all experiments, we
used a 80/20 train/test split.  The only exception is that we set
100,000 as the maximum number of instances for training or testing.
For all datasets, we combined and permuted the existing data.  This
was to ensure that all the data was iid before the adversary corrupts
the instances.  For the data dependent bounds, the results for the
random subspace method are the expected bounds give that the 500
hypotheses are sampled from all possible subsets.  On all experiments
we also report the error rate of predicting with the majority label.
This is poor but reasonable default classifier since it gives
information about label skew and is unaffected by feature corruption.
All experiments were performed on a 44 core Intel Linux server.

Given the large number of experiments and independent variables, we
limit the presented results to a single number of hypotheses for both
the fixed split technique and the random subspace technique.  For the
fixed-split method we picked the best result from
$h = \{3,5,7,9,11,13,15,21,31\}$.  For feature corruption, we ran
experiments from $0$ to $35$ corrupt features.  We sometimes stopped
the experiments early if error-rate degraded to majority label
baseline.  For the random subspace, given the expense of running the
algorithm, we only used a single set of parameters.  To control cost,
we used $h=500$ and set $k$ based on the $n/k$ value that gave the
best results for the fixed-split method.\footnote{When focused
  on a single problem with sufficient validation data, we recommend
  testing more parameters.}  Other values of $h$ and $k$ give
qualitatively similar results when the features are significantly
corrupted.  We do not report results for the modulus subspace method
as we are still in the preliminary stage of evaluating this method.

\begin{table*}[t]
  \caption{Datasets used in corruption experiments.}
  \label{learning-problems}
  \begin{center}
    \begin{small}
      \begin{sc}
        \begin{tabular}{lcccccc}
          \toprule
          Data set & Features & Label & $|Y|$ & $|$Train$|$ & $|$Test$|$ & Access \\
          \midrule
          UNO      & 1024 & Device Mode & 11 & 24413 & 6104 & non-public \\
          Pi       & 1024 & Device Mode & 3 & 6067 & 1517 & non-public \\
          Smart    & 512 & Device Mode & 4 & 10000 & 2000 & non-public \\
          Character Font & 409 & Italic Arial & 2 & 20989 & 5248 & UCI \\
          IoT Botnet & 115 & ACK Attack & 2 & 100000 & 42591 & UCI \\
          UJIIndoorLoc & 520 & Building Floor & 5 & 16838 & 4210 & UCI \\
          US Census Data & 68 & Marital Status & 5 & 100000 & 100000 & UCI \\
          CoverType  & 54 & Forest Covers & 7 & 100000 & 100000 & UCI Repo \\
          \bottomrule
        \end{tabular}
      \end{sc}
    \end{small}
  \end{center}
\end{table*}

\subsection{Electromagnetic Side Channel Data} \label{LADS-Experiments}

We are currently working on a project that determines the
computational mode of a device based on its unintended electromagnetic
(EM) emissions\cite{CASPER18}.  The goal of this project is to
determine if unauthorized code is running on the device.  While we do
not have space to give all the details on this problem, we have
captured EM data of two devices while they execute authorized code.
The data is captured using an antenna and a software defined radio
sampling at 25 MHz at a specified central frequency.  We then
processed that data using the fast Fourier transform into 1024
frequency bins and use those energy levels as our features.  Through
various techniques, we have labeled this data into device modes.

One convenient property of this data is the presence of harmonics
where information is correlated over the frequency spectrum.  This
along with strong differences between the device modes, makes the
learning problem fairly easy for standard machine learning techniques.
The difficulties arise when the system is used to make label
prediction at a different time and/or location.  Changing,
intermittent EM noise can be present and can corrupt different
features over a sequence of predictions.  This was our motivation to
develop these techniques.  While this EM noise problem is not a worst
case adversary, it has properties that make it difficult to analyze
with traditional train/test assumptions.

We ran experiments for three devices: an Arduino UNO, a Raspberry Pi,
and a smart meter.  It was difficult to do controlled experiments with
real environmental noise, so we used the adversarial noise model
explained at the start of this section.

The left side of Figure~\ref{em-plot} gives the result for an Arduino
UNO running a simple program consisting of loops of NOP statements.  A
simple loop that repeats a specific number of clock cycles causes a
repeated behavior that is picked up at a specific frequency as an
amplitude modulation of the CPU clock.  As can be seen, the random
forest eventually gets close to half the predictions wrong.  The
fixed-split method does much better, but the best results are for the
random subspace method.  This is reasonable as it has the best bound
against a weak adversary as explained in Section~\ref{RSM}.

We also give graphs of the worst case adversarial bounds.  It is not
surprising that these bounds are much worse since they assume the
adversary can maximize the errors by controlling the prediction of any
corrupt hypothesis.  Still the bounds are somewhat positive given that
the $\frac{n}{k}$ ratio is roughly 13.  The voting hypotheses must
have high accuracy to be able to tolerate corruption of almost half
the hypotheses.

\begin{figure}[ht]
  \begin{center}
    \includegraphics[scale=0.15]{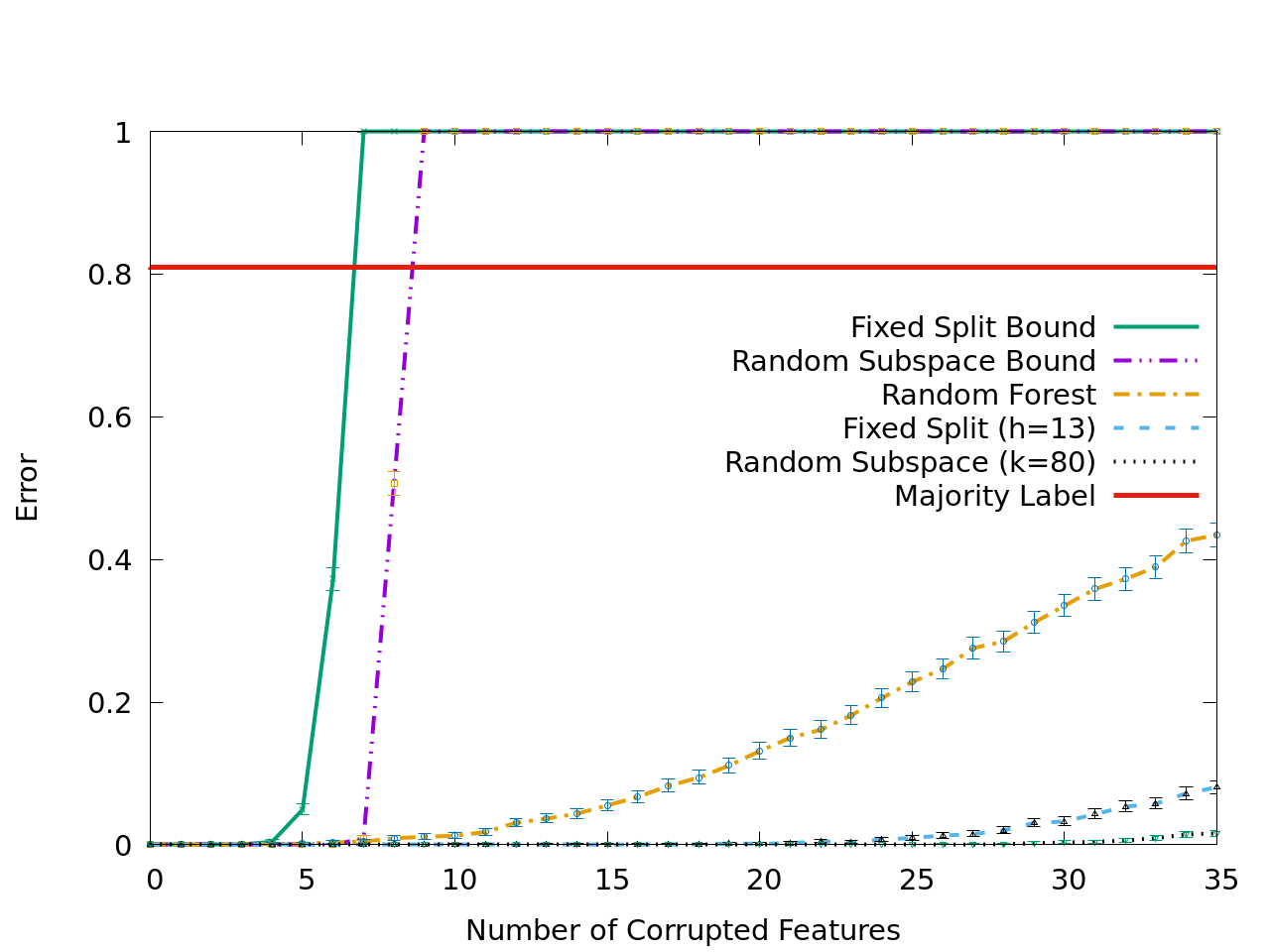}
    \includegraphics[scale=0.15]{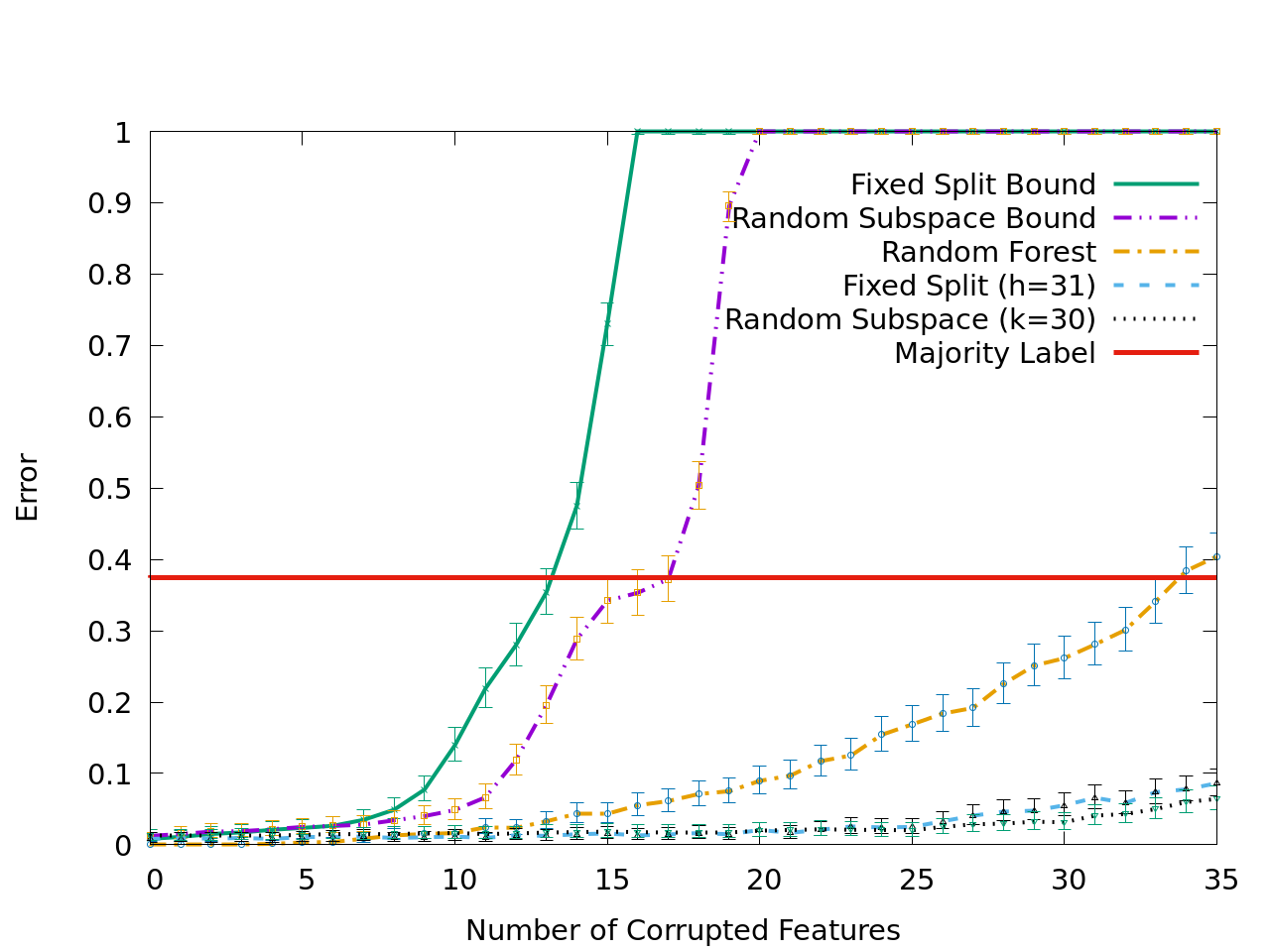}
    \caption{EM experiments with UNO on the left and Raspberry Pi on the right}
    \label{em-plot}
  \end{center}
\end{figure}

The right side of Figure~\ref{em-plot} tells a similar story.  It is
based on a Raspberry Pi running Linux with a simple program that loops
over SHA, string search, and sleep.  While hard to see, the random
forest classifier is doing slightly better at the start; however it
quickly decays as $l>10$.  Again the best performance, as the weak
adversary corruption increases, is the random subspace method.

\begin{figure}[ht]
  \begin{center}
    \includegraphics[scale=0.15]{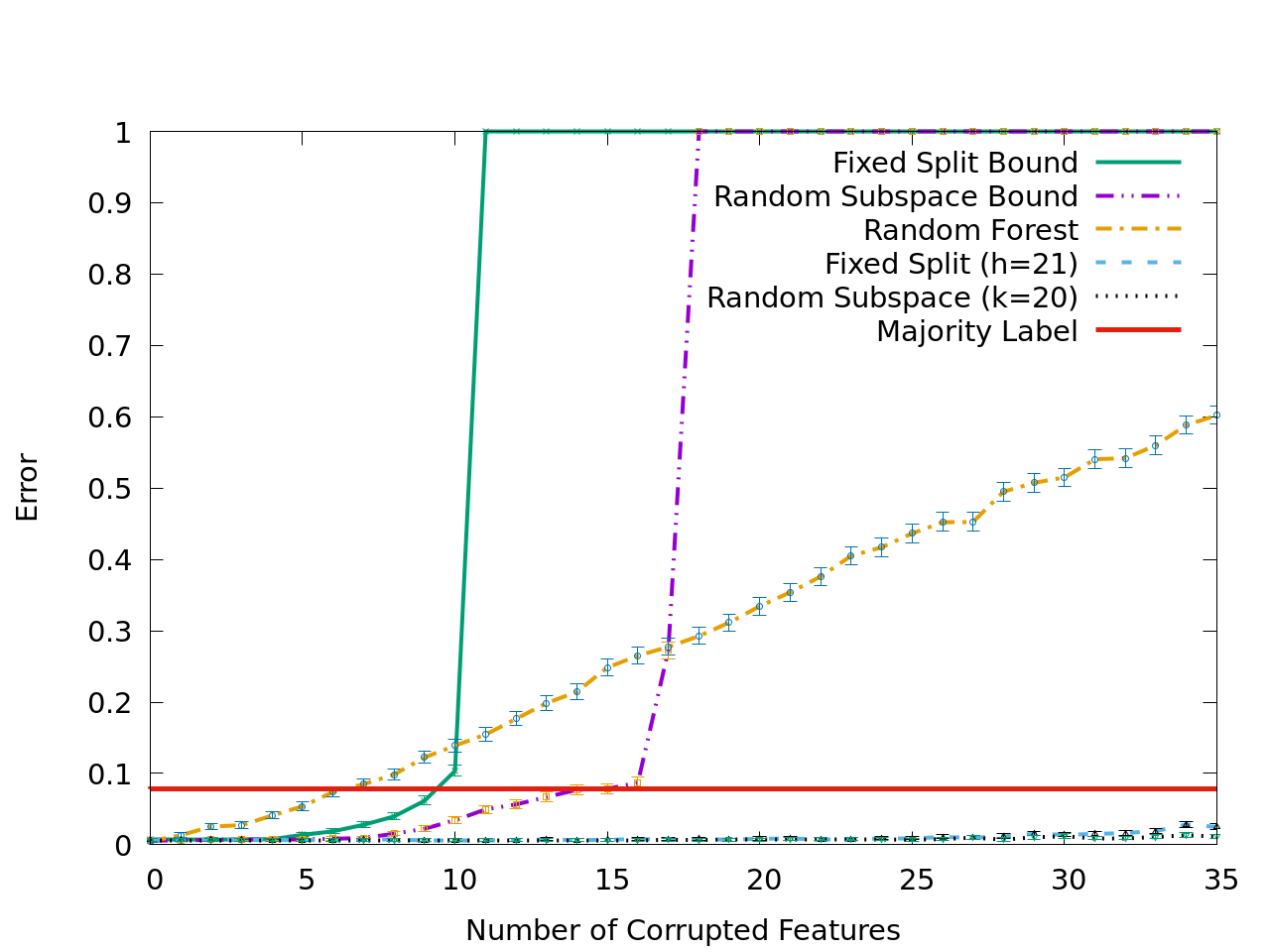}
    \includegraphics[scale=0.15]{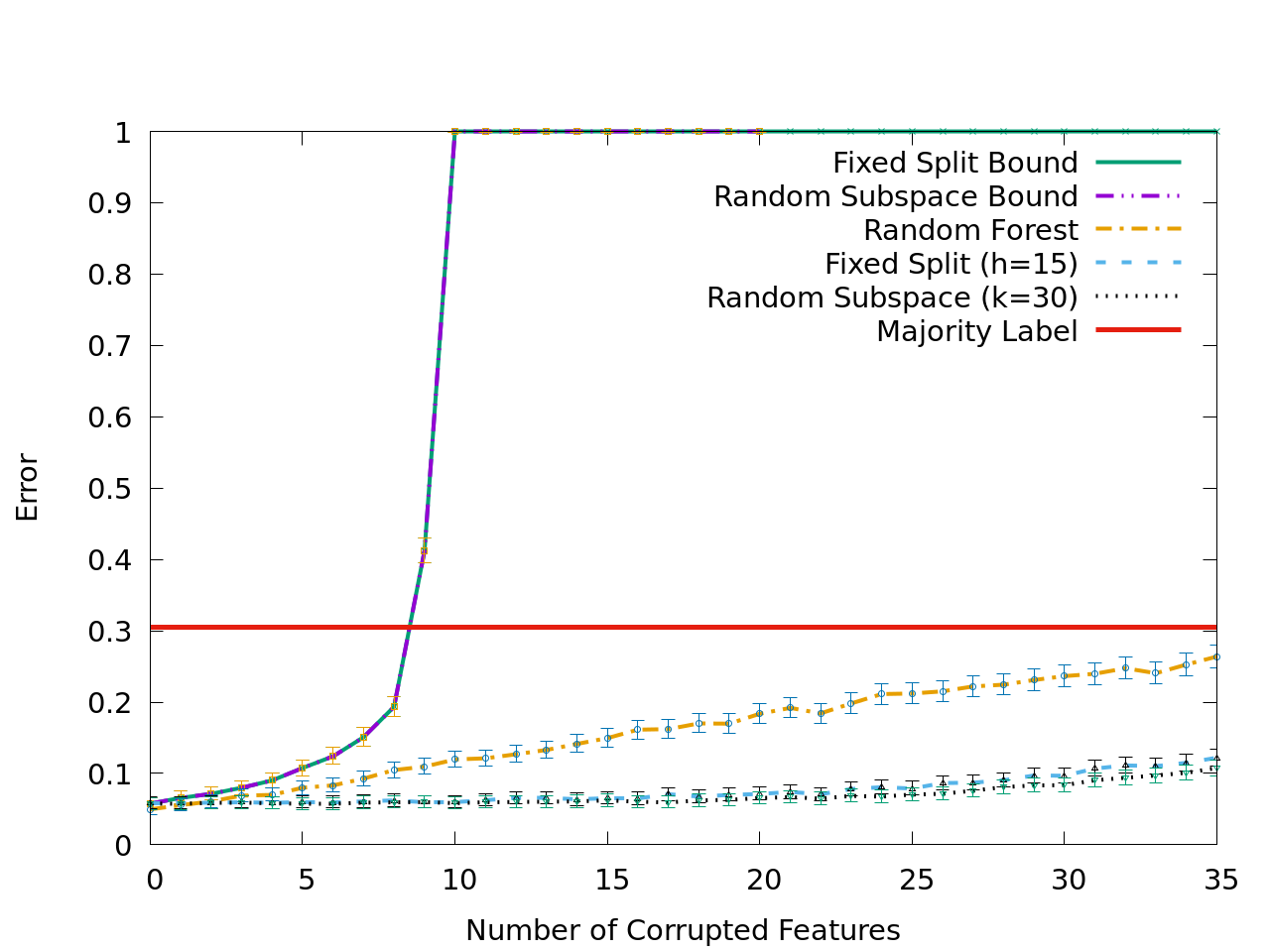}
    \caption{EM smart meter experiment on the left and UCI font experiment on the right}
    \label{mixed-plot}
  \end{center}
\end{figure}

On the left side of Figure~\ref{mixed-plot}, we give the results of
the smart meter experiments based on unmodified firmware running on
the device.  We are unsure why the result are so much worse for random
forest; perhaps it is related to the label skew in this problem.
However, both fixed-split and random subspace are largely resistant to
the corruption with the random subspace method having the lowest
error.

\subsection{UCI Data} \label{UCI-Experiments}

We selected five UCI datasets \cite{DKT18}, described in
Table~\ref{learning-problems}, from UCI by sorting based on number of
instances and choosing problems that fit certain criteria.  In
particular, we selected classification problems but avoided any
problems with less than fifty features or that required extensive
feature processing.  We also avoid problems that contained features
that were clearly a function of some set of original features.  The
motivation for our problem is that the basic features are
independently susceptible to noise, and having features that are
functionally related to each other would break that assumption and
spread the corruption.  We suggest that any derived, functionally
related features be placed in the same voting hypothesis to avoid
spreading corruption.

Our first UCI problem tries to determine if a bit-mapped Arial font
character is italic. The results here are similar to the previous
section with the exception that all the classifiers do not start at
zero error.  As can be seen in the right side of
Figure~\ref{mixed-plot}, the error increase for both subspace
classifiers is very slow.  At $l=35$ both classifiers are doing much
better than random forest with a slight advantage to the random
subspace classifier.

\begin{figure}[ht]
  \begin{center}
    \includegraphics[scale=0.15]{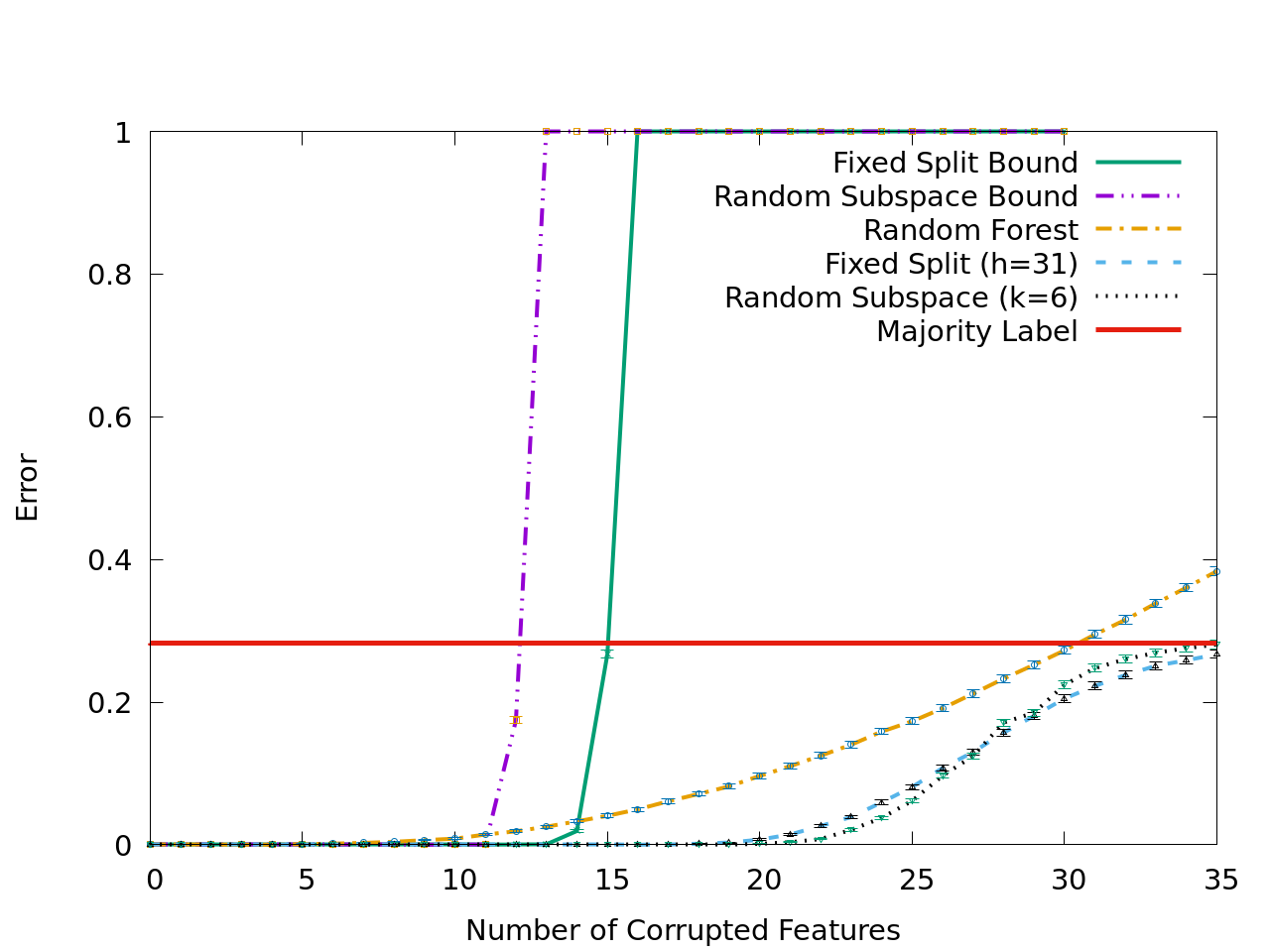}
    \includegraphics[scale=0.15]{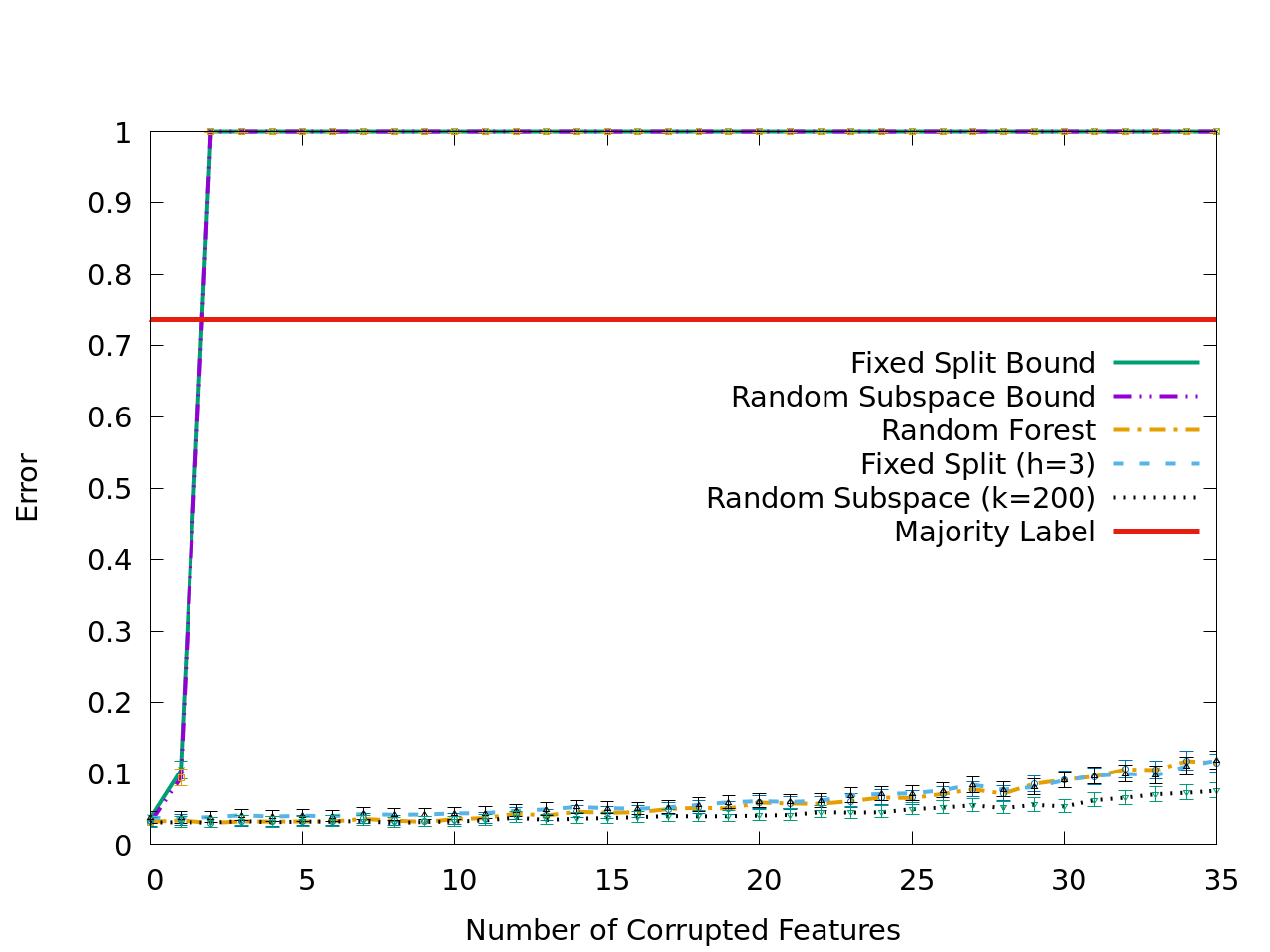}
    \caption{UCI botnet experiment on the left and UCI indoor location experiment on right}
    \label{uci1-plot}
  \end{center}
\end{figure}

On the left side of Figure~\ref{uci1-plot}, we give the results for
the binary label problem of determining whether a botnet attack is
occurring~\cite{MBM18}. Again, the results are positive with the
subspace classifiers tolerating roughly twice as many corrupt features
before making a significant number of errors.  After $l=20$, both
subspace classifiers start to rapidly decay.  While it is possible
that increasing $h$ and reducing $k$ could decrease the error rate
with these large amounts of corruption, the classifiers are already at
the point of having only six features per hypotheses which seems
extreme.  It is likely this problem has a large amount of relevant
feature redundancy.

The results in right side of Figure~\ref{uci1-plot} are interesting
and are based on predicting the floor the user is on in a building
based on data collected from Wi-Fi access points~\cite{TMM14}. The
adversary only has a minimal effect on all the classifiers.  We are
unsure if this is caused by the feature sparsity of this problem
combined with our choice of adversary.  We plan to study the issue of
instance sparsity in the future.  However, even in this case, we do
see a decrease in error rate for the random subspace algorithm.

\begin{figure}[ht]
  \begin{center}
    \includegraphics[scale=0.15]{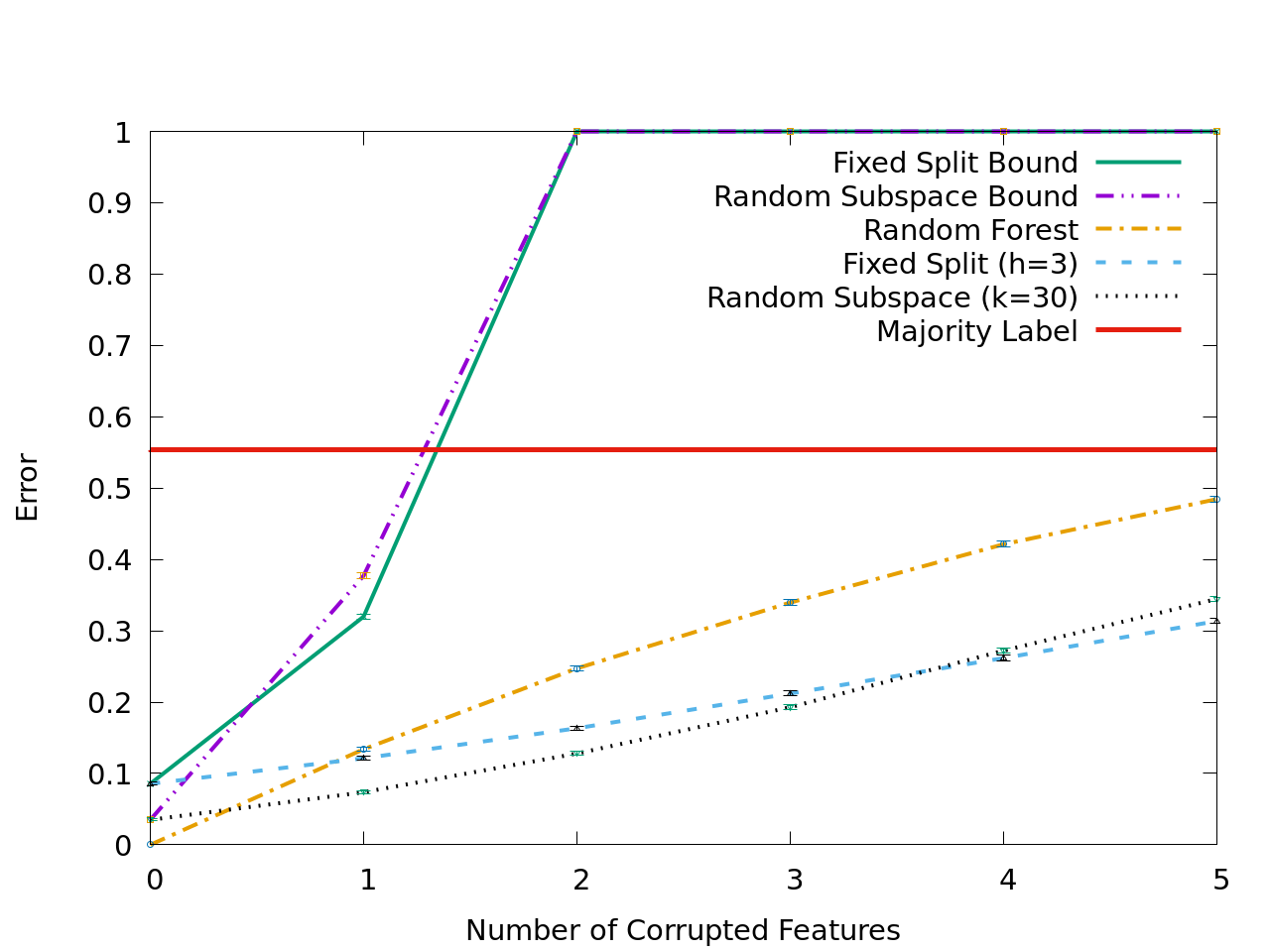}
    \includegraphics[scale=0.15]{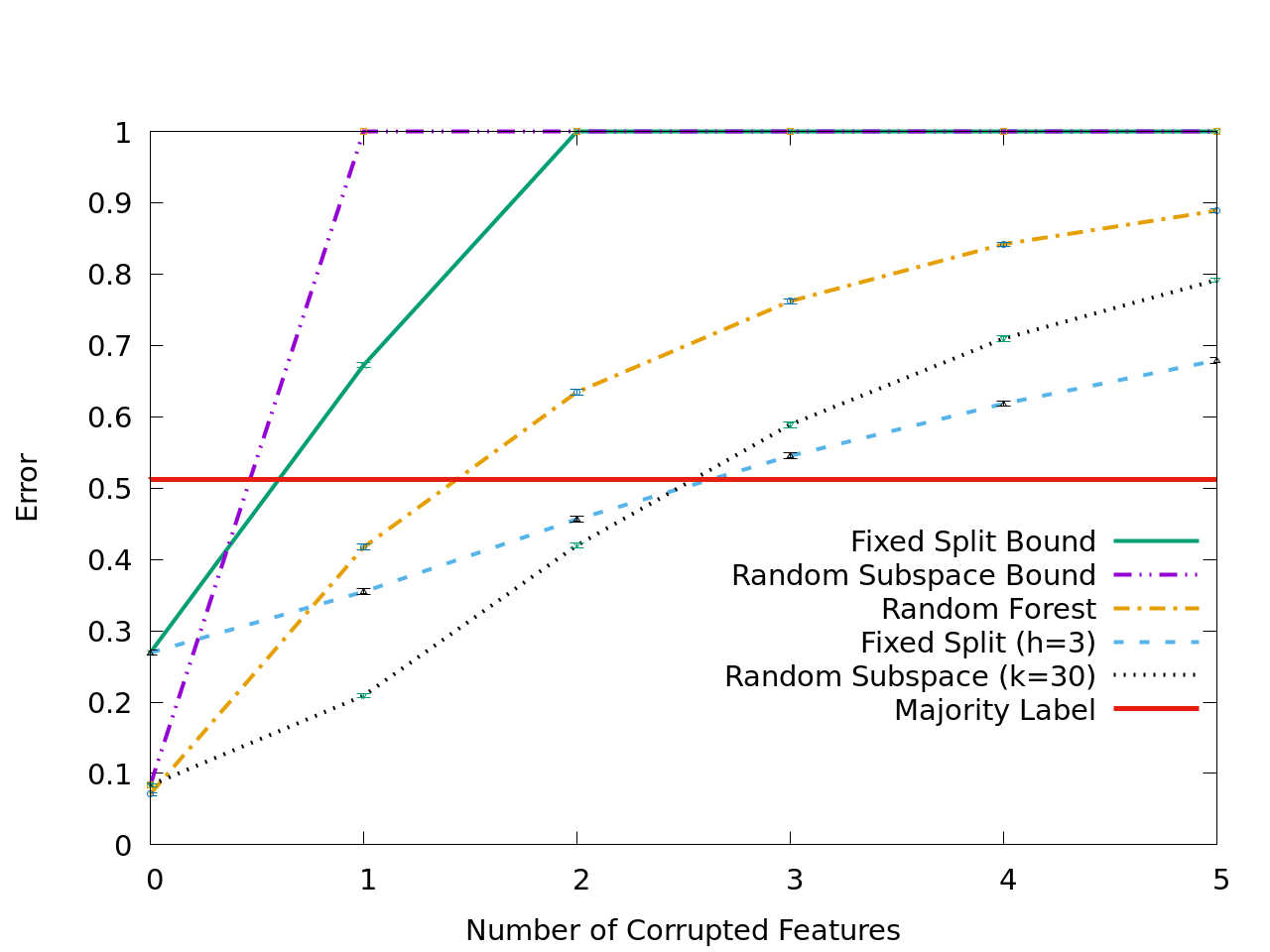}
    \caption{UCI census experiment on the left and UCI forest cover type on the right}
    \label{uci2-plot}
  \end{center}
\end{figure}

Our next data set is based on Census data.  Here we defined the label
based on the marital status since it has five labels values and
reasonable class balance.  As can be seen in the left side of
Figure~\ref{uci2-plot}, we cannot tolerate as many corrupt features,
but we also have significantly fewer total features.  Also we suffer
some loss of accuracy in the non-corrupt case, but as soon as the
corruption starts, the subspace techniques have lower error rates than
random forest.  In addition, we use the small value of $h=3$ as the
non-corrupt error rate rises quickly as $h$ increases.  This shows the
difficulty of the non-corrupt form of this problem for the subspace
techniques.  In principle, subspace method will not work with all
machine learning problems.  We will address this in the next section,
but in general, multiple algorithms should be tested with validation
datasets.

On the right side of Figure~\ref{cover-plot}, the learning problem
labels different types of forest cover.  The results are similar to
the previous Census data; however, in this case the problem is even
more difficult to learn.  Still, we do see improvement for $l=1$ and
$l=2$.  For higher values of $l$ all the algorithms are doing worse
than majority label prediction algorithm.

\section{Difficult Target Functions} \label{Difficult-Functions}

An alternative way to interpret our results is a way to quantify what
types of learning problems cannot be solved by subspace methods.  For
example, a conjunction with no redundant attributes will need at least
half the majority vote hypotheses to have every relevant variable.
Our subspace methods are designed to make sure any $l$ features are
missing in more than half of the hypotheses.  For a conjunction with
three terms, the chance that all three will appear in more than half
the hypotheses will be small even when $k$ is fairly large.  For
example, with $n=1000$ one would need to select $k > 793$ features for
the subspace method to have a 0.5 chance of working even without
corruption.  It is interesting how well many of our UCI experiments in
Section~\ref{UCI-Experiments} perform with much bigger $n/k$ ratios.
While not definitive, this suggests there are large amounts of
relevant feature redundancy in many practical problems.  This is
related to the fact that a sufficiently strong adversary can make
learning impossible by corrupting instances to a part of the space
with a different label.  On concepts like the previously mentioned
conjunction, without redundancy high accuracy can be impossible to
achieve against a zero norm adversary even when not using subspaces
since the adversary can change one feature to change a true
conjunction to a false conjunction.  This suggests a connection
between the performance of subspace methods on standard iid batch
learning and the performance of machine learning against zero norm
adversaries since both cases exploit redundancy.  It also suggest
algorithms that attempt to minimize the number of relevant features
are more susceptible to zero norm adversaries since they might learn
functions that remove redundancy.

\section{Future Work}

One interesting extension of this work is to apply it to regression
problems where prediction is a real number.  In this setting, we
replace majority vote with the median of the of ensemble predictions.
Surprisingly, all of our results carry over to this setting with only
a small increase in the computational complexity of computing the data
dependent bounds.  The key insight is that by using the robust median
statistic, the damage an adversary can do is limited.  The worst an
adversary can do is to corrupt hypotheses on one side of the median
and shift them to an extreme value on the other side of the median.
This will maximize how much the median changes.  The actual amount of
change depends on the number of corrupted hypotheses and the
non-corrupt empirical distribution of predictions.  At a minimum, at
most 50\% of the hypotheses can be corrupt otherwise the median can be
changed to any value.  This is equivalent to the situation with
majority vote classification.  The main difference with our results on
classification is the extra difficulty in generating the bound.  Since
we no longer have a binomial distribution for the loss function, we
need to make assumptions about the distribution of errors for the
uncorrupt hypotheses in order to generate bounds.  This is typical for
the regression setting and not an additional weakness of the
adversarial analysis.  In practice, one can use an uncorrupt test set
to estimate the distribution of uncorrupt error and to bound the error
as a function of the number of corrupt hypotheses.


\section{Conclusion}

This paper presents new subspace methods along with a new analysis
that shows, with appropriate parameters, subspace methods can tolerate
arbitrary corruption of a limited number of features.  While the
amount of corruption that can be tolerated depends on unknown details
of the problem, we give a statistic that can be used to estimate the
worst case performance using uncorrupt test data.  This is similar to
the traditional test bound used in iid supervised learning but allows
us to extend that framework to handle adversarial changes in the
instances.  While adversaries are not typically encountered in
learning problems, the proofs also apply to other situations that
include various types of instance distribution drift.  We give
experiments to show these algorithms perform well on a range of
realistic problems including five UCI datasets and three new datasets
based on electromagnetic side channel information.

\bibliography{learning}
\bibliographystyle{abbrv}

\end{document}